\DeclarePairedDelimiter{\abs}{\lvert}{\rvert}
\newcommand{\ltwo}{$\ell_2$}
\title[The Hidden Pitfalls of the Cosine Similarity Loss]{The Hidden Pitfalls of the Cosine Similarity Loss}
\begin{document}

\maketitle

\begin{abstract}%
We show that the gradient of the cosine similarity between two points goes to zero in two under-explored settings: (1) if a point has large magnitude or (2) if the
points are on opposite ends of the latent space. Counterintuitively, we prove that optimizing the cosine similarity between points forces them to \emph{grow} in
magnitude. Thus, (1) is unavoidable in practice. We then observe that these derivations are extremely general -- they hold across deep learning architectures
and for many of the standard self-supervised learning (SSL) loss functions. This leads us to propose \emph{cut-initialization}: a simple change to network
initialization that helps all studied SSL methods converge faster.

\end{abstract}


\section{Introduction}
Self-supervised learning (SSL) methods can learn robust, multi-use representations in the absence of labeled data by ensuring that similar (positive) inputs are mapped to nearby locations in embedding space. Despite their success, we find that the properties of the cosine similarity -- the common loss function across SSL methods -- have gone under-studied.


Indeed, we verify that the cosine similarity loss has unexpected interactions with gradient descent. Namely, the gradient acting on a point is near-zero if the point has large magnitude or if it is on the opposite side of the space from its counterpart. We prove that both cases impose at least a quadratic slowdown on the gradient descent convergence. Furthermore, we show that optimizing the cosine similarity between two points \emph{must} cause those points to grow in magnitude.

This leads to an unfortunate catch-22: SSL methods can only be trained under small embedding norms but optimizing the loss-function grows those same embedding norms. Furthermore, this derivation holds for any objective that is itself a function of the cosine similarity (such as the InfoNCE loss~\cite{contr_pred_coding}) and is \emph{independent} of the SSL architecture, implying that almost all SSL models are affected. We experimentally verify that this is a concern in practice by showing that large embedding norms indeed slow down SSL convergence across architectures and training paradigms. We therefore propose \emph{cut-initialization} to mitigate the issue and verify that, when paired with \ltwo-normalization, cut-initialization improves convergence across all settings.
\section{Preliminaries and Related Work}

SSL representations are learned from a dataset without utilizing any labels. This is typically done by obtaining two augmented variants $x_i$ and $x_j$ from an input image $x$ and ensuring that the corresponding embeddings $z_i$ and $z_j$ have high cosine similarity. We refer to $(x_i, x_j)$ (resp. $(z_i, z_j)$) as `positive' pairs of points (resp. embeddings). Each SSL method then introduces an additional mechanism that prevents the entire learned representation from collapsing to a single embedding.

\paragraph{SSL methods.} SimCLR~\cite{simclr} follows a line of research of contrastive methods for self-supervised learning \cite{contr_pred_coding, data_efficient_cpc, moco, mocov2, pirl} which all use repulsions from \emph{negative} samples, $z_k$, to prevent collapse. These negative samples are the other samples in the batch, implying that no two inputs can be mapped to the same location. Thus, SimCLR minimizes the InfoNCE loss \cite{contr_pred_coding} for embedding $z_i$:
\begin{equation}
    \label{eq:infonce}
    \mathcal{L}_i(\mathbf{Z}) = -\log \frac{\text{sim}(z_i, z_j)}{\sum_{k \not\sim i} \text{sim}(z_i, z_k)} = -\underbrace{\hat{z}_i^\top \hat{z}_j}_{\mathcal{L}^\mathcal{A}_i(\mathbf{Z})} + \underbrace{\log \left( S_i \right)}_{\mathcal{L}^\mathcal{R}_i(\mathbf{Z})},
    \vspace*{-0.3cm}
\end{equation}
where $\hat{z}$ is the unit vector of $z$, $\text{sim}(a, b) = \exp \left( \hat{a}^\top \hat{b} \right)$ is the exponent of the cosine similarity, and $S_i = \sum_{k \not\sim i} \text{sim}(z_i, z_k)$ is the sum over the repulsive terms. For clarity, we split the loss term into attraction $\mathcal{L}^\mathcal{A}_i(\mathbf{Z})$ and repulsion $\mathcal{L}^\mathcal{R}_i(\mathbf{Z})$ loss functions\footnote{The result of optimizing the attraction term is that the positive pair of embeddings is pulled closer together. Similarly, optimizing the repulsion term causes the negative embedding pairs to be pushed apart from each other.}, 
where $\mathcal{L}^\mathcal{A}_i(\mathbf{Z})$ is the negative cosine similarity between $z_i$ and $z_j$. Surprisingly, \emph{BYOL}~\cite{byol} and \emph{SimSiam}~\cite{simsiam} showed that one can optimize $\mathcal{L}^\mathcal{A}_i$, the cosine similarity between positive pairs, and avoid collapse by only applying the gradients to embedding $z_i$ (rather than to both $z_i$ and $z_j$). We refer to these as \emph{non-contrastive} methods. 

\begin{figure}
    \centering
    \includegraphics[trim={5cm 4.2cm 12cm 2.7cm},clip,width=0.27\linewidth]{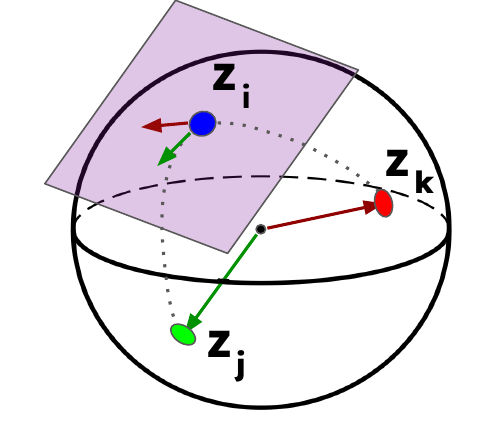}
    \quad \quad \quad
    \includegraphics[trim={1cm 4.5cm 8.5cm 4.5cm},clip,width=0.6\linewidth]{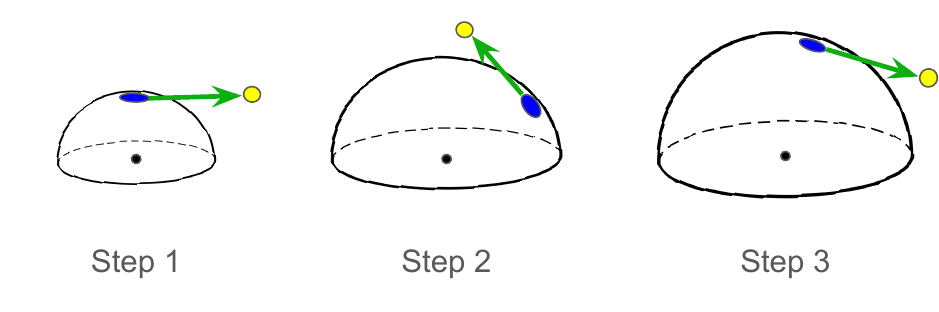}
    \caption{\emph{Left}: The gradients w.r.t. $z_i$ in Proposition~\ref{prop:cos_sim_grads} and Corollary~\ref{cor:infonce_grads} exclusively exist in the tangent space at $\vec{z}_i$. \emph{Right}: The growing embeddings in Corollary~\ref{cor:embeddings_grow}. Blue points represent $z_i$ at iterations $t = 1, 2, 3$. Yellow points represent $z_i'$, i.e. the result of each step of gradient descent.}
    \label{fig:embedding_visualizations} \vspace*{-0.5cm}
\end{figure}

Across SSL methods, the quality of a learned representation is measured by training classifiers on the learned representations. If the classifier achieves high accuracy then embeddings corresponding to the same class must be near one another. As is standard, we use the $k$-nn classifier accuracy (this effectively lower-bounds the linear classifier's finetuning accuracy~\cite{dinov2}).



\paragraph{SSL Learning Dynamics.}
The seminal work of~\cite{understanding_contr_learn} showed that repulsion-based contrastive learning must satisfy two requirements: all positive pairs must be near one another (alignment) and all negative samples must spread evenly over the hypersphere (uniformity). They then argued that these requirements lead to classes grouping together. Expanding on this blueprint, subsequent works have sought to formalize the learning capacity of contrastive methods \cite{geom_contrastive_loss, understanding_contr_learn, latent_inversion, provable_contr_guarantees, understanding_contr_learn_2} while much of the research studying non-contrastive methods has focused on how their architectural components help to prevent collapse \cite{simsiam_avoid_collapse, dim_collapse_ssl, BYOL_orthogonality, direct_pred, ssl_dim_reduction, duality_contr_non-contr}.

Importantly, both the contrastive and non-contrastive literature assumes that an embedding's direction in latent space is its relevant parameter. The magnitude's role has, to our knowledge, only been considered in \cite{normface} and \cite{spherical_embeddings}. The former shows that replacing the softmax's inner products with the cosine similarity resolves distributional issues in the softmax function. The latter identifies that embedding norms affect the cosine similarity but resolves this by simply standardizing the embeddings to have consistent norms. Our work shows that the embedding norms will always grow when optimizing the cosine similarity. Thus, we show that standardizing the norms is insufficient -- they must also be minimized. We lastly note that it is an open question why SSL models require \ltwo-normalization \cite{direct_pred, byol, common_stability_comment} in order to converge. Our work resolves this question.
\section{The Cosine Similarity's Dynamics}

We begin with the gradients of the cosine similarity loss with respect to an arbitrary point $z_i$:

\begin{proposition}[Prop. 3 in \cite{spherical_embeddings}; proof in \ref{prf:prop_grad_grows}]
    \label{prop:cos_sim_grads}
    Let $z_i$ and $z_j$ be two points in $\mathbb{R}^d$ and define $\mathcal{L}_i^\mathcal{A}(\mathbf{Z}) = -\hat{z}_i^\top \hat{z}_j$. Let $\phi_{ij}$ be the angle between $z_i$ and $z_j$. Then the gradient of $\mathcal{L}_i^\mathcal{A}(\mathbf{Z})$ w.r.t. $z_i$ is
    \[ \nabla_i^\mathcal{A} = \frac{1}{||z_i||} \left(\mathbf{I} - \frac{z_i z_i^\top}{||z_i||^2} \right) \frac{\mathbf{z}_j}{||z_j||} = \left( \frac{\hat{z}_j}{||z_i||} \right)_{\perp z_i} \]
    where $a_{\perp b}$ is the component of $a$ orthogonal to $b$.
    This has magnitude $||\nabla_i^\mathcal{A}|| = \frac{\sin(\phi_{ij})}{||z_i||}$.
\end{proposition}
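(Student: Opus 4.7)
The plan is to compute the gradient directly from the definition $\mathcal{L}^\mathcal{A}_i = -z_i^\top z_j / (\|z_i\|\,\|z_j\|)$, treating $z_j$ as fixed with respect to the differentiation in $z_i$. First I would apply the product (or quotient) rule using the two elementary identities $\nabla_{z_i}(z_i^\top z_j) = z_j$ and $\nabla_{z_i}\|z_i\| = z_i/\|z_i\|$. This produces two contributions: a term proportional to $z_j/(\|z_i\|\,\|z_j\|)$ coming from differentiating the numerator, and a term proportional to $(z_i^\top z_j)\,z_i/(\|z_i\|^3\,\|z_j\|)$ coming from differentiating $\|z_i\|$ in the denominator.

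Next I would factor out $1/\|z_i\|$ and recognize the remaining operator $\mathbf{I} - z_i z_i^\top / \|z_i\|^2$ as the orthogonal projector $P$ onto the hyperplane perpendicular to $z_i$. Collecting terms then matches the factored form in the proposition exactly, and the alternate notation $(\hat{z}_j/\|z_i\|)_{\perp z_i}$ follows from the definition of $P\hat{z}_j$ as the component of $\hat{z}_j$ orthogonal to $z_i$. This also makes the geometric content of Figure~\ref{fig:embedding_visualizations} (left) manifest: the gradient lives in the tangent space at $\hat{z}_i$.

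For the magnitude, I would use the two defining properties of an orthogonal projector, $P^\top = P$ and $P^2 = P$. These give
\[
\|P\hat{z}_j\|^2 = \hat{z}_j^\top P^\top P \hat{z}_j = \hat{z}_j^\top P \hat{z}_j = 1 - (\hat{z}_i^\top \hat{z}_j)^2 = 1 - \cos^2(\phi_{ij}) = \sin^2(\phi_{ij}).
\]
Since $\phi_{ij}\in[0,\pi]$ we have $\sin(\phi_{ij})\geq 0$, so taking the square root and dividing by $\|z_i\|$ yields $\|\nabla_i^\mathcal{A}\| = \sin(\phi_{ij})/\|z_i\|$ as claimed.

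There is no real obstacle here; the exercise is essentially bookkeeping. The only subtle point is that combining the two quotient-rule terms in the \emph{right} way — by pulling out a single $1/\|z_i\|$ and recognizing the projector — is what makes the statement clean and makes the magnitude fall out in one line via the projector identity, rather than via a clumsier direct expansion.
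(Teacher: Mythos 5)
Your proposal is correct and follows essentially the same route as the paper: both start from the quotient rule with $\nabla_{z_i}(z_i^\top z_j)=z_j$ and $\nabla_{z_i}\|z_i\|=z_i/\|z_i\|$, and both arrive at the projector form $\frac{1}{\|z_i\|}\bigl(\mathbf{I}-\frac{z_iz_i^\top}{\|z_i\|^2}\bigr)\hat{z}_j$. The only stylistic difference is that the paper gets there by decomposing $\hat{z}_j$ into its components parallel and orthogonal to $z_i$ and cancelling the parallel piece against the $-\hat{z}_i\cos\phi_{ij}/\|z_i\|$ term, whereas you combine the two quotient-rule terms directly into the projector -- a slightly more economical piece of algebra. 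Your projector-identity computation of the magnitude, $\|P\hat{z}_j\|^2=\hat{z}_j^\top P\hat{z}_j=1-\cos^2\phi_{ij}$, actually supplies a step the paper's appendix proof leaves implicit, so your write-up is if anything more complete. One shared cosmetic wrinkle (present in the paper's proof as well): the loss carries a leading minus sign, so the gradient-descent update direction is $+P\hat{z}_j/\|z_i\|$; be explicit about where that sign is absorbed.
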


\noindent This has an easy interpretation: $\mathbf{I} - \frac{z_i z_i^\top}{||z_i||^2}$ projects the unit vector $\hat{z}_j$ onto the subspace orthogonal to $\vec{z}_i$ and it is then inversely scaled by $||z_i||$. We visualize this in Figure~\ref{fig:embedding_visualizations} (left), where the purple plane represents the projection onto the tangent space at $z_i$. Corollary~\ref{cor:infonce_grads} in Section~\ref{app:infonce_grads} of the Appendix shows that a similar behavior holds for the InfoNCE loss function. As a consequence of these results, we see that optimizing the cosine similarity or the InfoNCE loss can \emph{only grow the embeddings}:
\begin{corollary}[Proof in~\ref{prf:cor_embeddings_grow}]
    \label{cor:embeddings_grow}
    Let $z_i$ and $z_j$ be positive embeddings with angle $\phi_{ij}$. Let $z_i'$ be the embedding after applying the gradients in Proposition~\ref{prop:cos_sim_grads} or Corollary~\ref{cor:infonce_grads} to $z_i$ via a step of gradient descent. Then
    $||z_i'|| \geq ||z_i||$.
\end{corollary}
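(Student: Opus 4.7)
The plan is to exploit the fact, already made explicit in Proposition~\ref{prop:cos_sim_grads} and (as noted in the caption of Figure~\ref{fig:embedding_visualizations}) in Corollary~\ref{cor:infonce_grads}, that the gradient $\nabla_i^{\mathcal{A}}$ lives entirely in the tangent space at $z_i$, i.e.\ it is orthogonal to $z_i$ itself. Once this orthogonality is in hand, the statement reduces to a one-line application of the Pythagorean theorem.

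More concretely, first I would write the gradient descent update as $z_i' = z_i - \eta\, \nabla_i$, where $\nabla_i$ stands for either $\nabla_i^{\mathcal{A}}$ from Proposition~\ref{prop:cos_sim_grads} or the corresponding InfoNCE gradient from Corollary~\ref{cor:infonce_grads}, and $\eta > 0$ is the learning rate. For the cosine similarity case, the factor $\mathbf{I} - z_i z_i^\top / \|z_i\|^2$ in Proposition~\ref{prop:cos_sim_grads} is precisely the projector onto the hyperplane orthogonal to $z_i$, so $z_i^\top \nabla_i^{\mathcal{A}} = 0$ follows immediately. For the InfoNCE case, the same conclusion holds because $\nabla_i^{\mathcal{R}}$ is a nonnegative linear combination of terms each individually projected onto the tangent space at $z_i$ (so the projector commutes out of the sum), leaving the total gradient orthogonal to $z_i$ as well.

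Given orthogonality, I would then compute
\[
\|z_i'\|^2 \;=\; \|z_i - \eta \nabla_i\|^2 \;=\; \|z_i\|^2 \;-\; 2\eta\, z_i^\top \nabla_i \;+\; \eta^2 \|\nabla_i\|^2 \;=\; \|z_i\|^2 + \eta^2 \|\nabla_i\|^2 \;\geq\; \|z_i\|^2,
\]
and conclude $\|z_i'\| \geq \|z_i\|$, with equality only when $\nabla_i = 0$ (i.e.\ when $\phi_{ij} \in \{0, \pi\}$ for the pure attraction term).

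I do not anticipate a serious obstacle: the only nontrivial ingredient is verifying that the InfoNCE gradient is also tangential at $z_i$, which is immediate from the form given in Corollary~\ref{cor:infonce_grads}. The conceptual point worth emphasizing in the write-up is that this is not a property of the particular step size or of any approximation; it is a structural consequence of the cosine similarity depending only on the direction of $z_i$, so its gradient must live in the tangent space, and any gradient step therefore strictly increases $\|z_i\|$ whenever the gradient is nonzero.
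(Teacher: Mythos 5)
Your proof is correct and is essentially identical to the paper's: both arguments rest on the observation that the gradient lies in the tangent space at $z_i$ (hence $z_i^\top \nabla_i = 0$) and then apply the Pythagorean theorem to the update $z_i' = z_i - \eta\nabla_i$, with the InfoNCE case handled by noting its gradient is a linear combination of vectors each already projected orthogonal to $z_i$. No gaps.
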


\noindent We consider this surprising: one would expect that optimizing the cosine similarity would bring points ``in'' rather than ``out''. Nonetheless, the
results in Prop.~\ref{prop:cos_sim_grads} and Corollary~\ref{cor:embeddings_grow} reveal an inevitable catch-22: we require small embeddings to optimize the
cosine similarity but optimizing the cosine similarity grows the embeddings. This is the key dynamic that \cite{spherical_embeddings} missed. Note, these
results do not depend on the embedding pipeline -- they simply assume that we apply the cosine similarity onto some distribution $\mathbf{Z}$. Furthermore,
these results hold for the mean squared error between normalized embeddings, since $||\hat{z}_i - \hat{z}_j||_2^2 = 2 - 2\hat{z}_i^\top \hat{z}_j$. We visualize
Corollary~\ref{cor:embeddings_grow} in Figure~\ref{fig:embedding_visualizations} (right).

\paragraph{Effects on Convergence.}
Our key insights lie in the cosine similarity's (and, by extension, the InfoNCE loss's) gradient magnitude. Namely, it goes to zero when an embedding has large
norm or has a large angle to its positive-pair counterpart. We refer to these as the \emph{embedding-norm} and \emph{opposite-halves} effects, respectively. We
note that both are counter-intuitive: the former contradicts SSL intuition that an embedding's direction is the relevant term~\cite{understanding_contr_learn}
and the latter implies that, for half of the loss-landscape, the gradients shrink as the loss grows. Despite this, both quadratically slow down convergence:

\begin{theorem}[Proof in \ref{prf:thm_convergence_rate}]
    Let $z_i$ and $z_j$ be embeddings with equal norm, i.e. $||z_i|| = ||z_j|| = \rho$. Let $z_i' = z_i + \frac{\gamma}{\rho}(z_j)_{\perp z_i}$ and $z_j' = z_j + \frac{\gamma}{\rho}(z_i)_{\perp z_j}$ be the embeddings after maximizing the cosine similarity via a step of gradient descent with learning rate $\gamma$. Then the change in cosine similarity is bounded from above by:
        \begin{equation}
            \label{eq:thm_statement}
            \hat{z}_i'^\top \hat{z}_j' - \hat{z}_i^\top \hat{z}_j < \frac{2 \gamma \sin^2 \phi_{ij}}{\rho^2}.
        \end{equation}
    
    \label{thm:convergence_rate}
\end{theorem}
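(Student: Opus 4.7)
The plan is to compute $\hat z_i'^\top \hat z_j'$ in closed form using orthogonality, subtract $\cos\phi_{ij}$, and simplify to a form directly comparable to the claimed bound. Write $\phi = \phi_{ij}$ for brevity.

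First I would compute the new norms. Since $(z_j)_{\perp z_i}$ is orthogonal to $z_i$ by construction, the cross term in $\|z_i'\|^2$ vanishes, and the identity $\|(z_j)_{\perp z_i}\|^2 = \|z_j\|^2 - (z_i^\top z_j)^2/\|z_i\|^2 = \rho^2\sin^2\phi$ gives $\|z_i'\|^2 = \rho^2 + \gamma^2\sin^2\phi$. By the symmetry between the $i$ and $j$ updates, $\|z_j'\|^2$ equals the same quantity, so the denominator of the new cosine similarity is simply $\rho^2 + \gamma^2\sin^2\phi$.

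Next I would expand $z_i'^\top z_j'$ into four summands: the base term $z_i^\top z_j = \rho^2\cos\phi$; two mixed terms $z_i^\top (z_i)_{\perp z_j}$ and $(z_j)_{\perp z_i}^\top z_j$, each equal to $\rho^2\sin^2\phi$ via the same perpendicular-norm identity; and the genuinely new cross term $(z_j)_{\perp z_i}^\top (z_i)_{\perp z_j}$. For this last piece I would write each factor as $z - \rho\cos\phi\,\hat z$ and directly expand, obtaining $-\rho^2\cos\phi\sin^2\phi$. Assembling these terms and dividing by $\rho^2+\gamma^2\sin^2\phi$, the $\cos\phi$ contributions telescope upon subtracting $\hat z_i^\top\hat z_j = \cos\phi$, leaving $\hat z_i'^\top \hat z_j' - \hat z_i^\top \hat z_j = 2\gamma\sin^2\phi\,(\rho - \gamma\cos\phi)/(\rho^2 + \gamma^2\sin^2\phi)$.

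The final step is to show this expression is strictly less than $2\gamma\sin^2\phi/\rho^2$. After cancelling the positive common factor $2\gamma\sin^2\phi$ and cross-multiplying (both denominators are positive), the claim reduces to a polynomial inequality in $\rho$, $\gamma$, and $\cos\phi$ alone. This last step is the main obstacle: the comparison is tight to leading order in $\gamma$, so it requires careful tracking of the $\gamma\cos\phi$ and $\gamma^2\sin^2\phi$ correction terms. The $1/\rho^2$ factor emerging in the bound is precisely the quadratic slowdown the theorem advertises: doubling $\rho$ quarters the maximal per-step progress in cosine similarity, directly tying the \emph{embedding-norm} effect from Proposition~\ref{prop:cos_sim_grads} to convergence speed.
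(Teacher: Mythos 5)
Your algebra is correct as far as it goes, and your route is essentially the paper's: expand $z_i'^\top z_j'$ into four summands, evaluate the two mixed terms as $\rho^2\sin^2\phi$ and the cross term as $-\rho^2\cos\phi\sin^2\phi$, and then compare denominators. The one structural difference is that you keep the new norm exactly, whereas the paper only uses $\rho'\ge\rho$. The genuine gap is the step you defer. You take the update literally from the theorem statement, $z_i'=z_i+\frac{\gamma}{\rho}(z_j)_{\perp z_i}$, but the gradient in Proposition~\ref{prop:cos_sim_grads} --- and the paper's proof --- uses the projection of the \emph{unit} vector, $(\hat z_j)_{\perp z_i}$, which is smaller by the factor $\|z_j\|=\rho$ (the statement has a typo here). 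With your version the exact change is $2\gamma\sin^2\phi\,(\rho-\gamma\cos\phi)/(\rho^2+\gamma^2\sin^2\phi)\approx 2\gamma\sin^2\phi/\rho$ for small $\gamma$, which \emph{exceeds} the claimed bound $2\gamma\sin^2\phi/\rho^2$ whenever $\rho>1$: after cancelling $2\gamma\sin^2\phi$ and cross-multiplying, you need $\rho^2(\rho-\gamma\cos\phi)<\rho^2+\gamma^2\sin^2\phi$, which fails as $\gamma\to 0$ for any $\rho\ge 1$. So the ``last step'' is not a matter of careful bookkeeping --- it is unprovable in your setup. Your own remark that the comparison is ``tight to leading order in $\gamma$'' should have been the warning sign: it is actually off by a factor of $\rho$.

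If you redo the computation with $\delta_{ij}=(\hat z_j)_{\perp z_i}$ (so $\|\delta_{ij}\|=\sin\phi$, $z_i^\top\delta_{ji}=\rho\sin^2\phi$, and $\delta_{ij}^\top\delta_{ji}=-\sin^2\phi\cos\phi$), your method yields $\hat z_i'^\top\hat z_j'-\cos\phi = \frac{2\gamma\sin^2\phi\left(1-\gamma\cos\phi/\rho^2\right)}{\rho^2+\gamma^2\sin^2\phi/\rho^2}$, and the comparison with $2\gamma\sin^2\phi/\rho^2$ reduces to $-\cos\phi<\gamma\sin^2\phi/\rho^2$. That is immediate for $\cos\phi\ge 0$ but fails for obtuse $\phi_{ij}$ and small $\gamma$, so even the corrected version needs either a hypothesis like $\phi_{ij}\le\pi/2$ or a weakened bound. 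Notably, the paper's own proof hides the same restriction: its step replacing $1/\rho'^2$ by $1/\rho^2$ is valid only when the bracketed quantity (which is $\rho^2\cos\phi + O(\gamma)$) is nonnegative. Your exact-denominator variant is the cleaner argument precisely because it forces this condition into the open; to salvage the proof you should fix the normalization, state the sign condition on $\cos\phi_{ij}$ explicitly, and then the final inequality follows in one line.
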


\noindent \begin{minipage}{0.35\textwidth}
To visualize this slowdown, we apply the cosine similarity gradients to sets of random samples in $\mathbb{R}^{20}$. By varying the mean embedding norms and
$\phi_{ij}$ values, we can then observe how much each parameter affects the convergence. Namely, we apply the gradients in Prop.~\ref{prop:cos_sim_grads} until
the mean cosine similarity exceeds $0.999$. We plot the number of steps until convergence in Figure~\ref{fig:convergence_rate}. There, we see that although

\end{minipage}
\quad
\begin{minipage}{0.6\textwidth}
    \centering
    \includegraphics[width=0.9\textwidth]{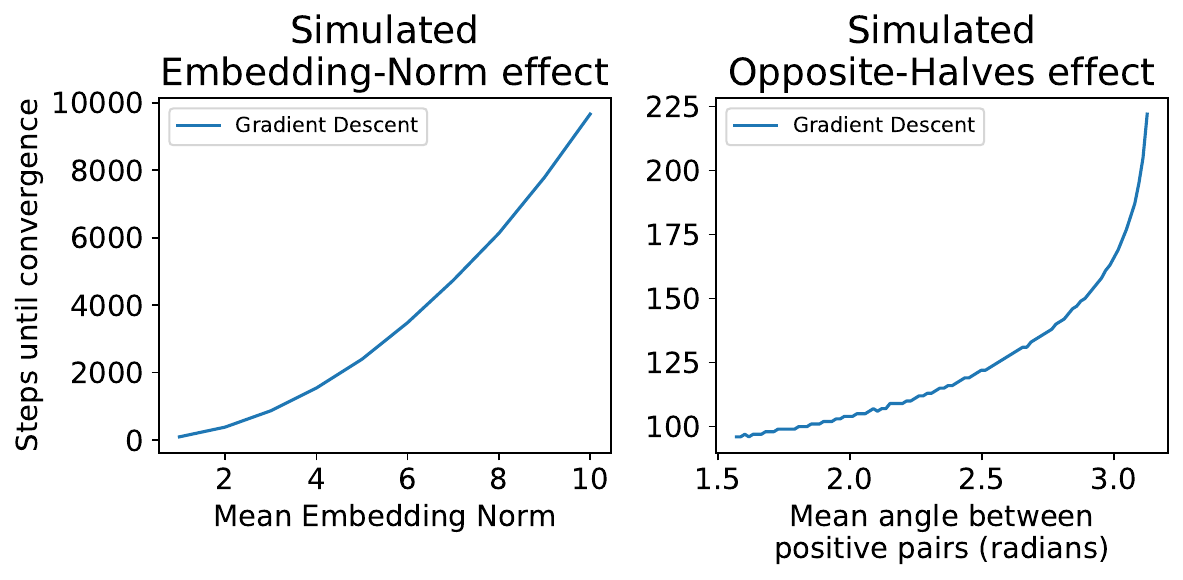}
    \captionof{figure}{The effect of the embedding norm and angle between positive samples on the convergence rate.}
    \label{fig:convergence_rate}
\end{minipage} \vspace*{0.1cm}

\noindent both effects incur quadratic slowdowns, having large embedding norms is \emph{significantly} worse for optimization ($\sim\!\!100\times$ slowdown)
than having large angles between positive pairs ($\sim\!2\times$ slowdown).

We now discuss whether these effects occur in practice. First, we plot the embedding norms during training on Cifar10 in Figure~\ref{fig:embed_mags} and see
that, across SSL models, the norms indeed grow during training. Thus, Corollary~\ref{cor:embeddings_grow} is experimentally supported. Furthermore, applying
\ltwo-normalization counteracts this growth since it forces the embeddings to shrink over time. Consequently, Table~\ref{tbl:embedding_norm_effect} shows that
\ltwo-normalization directly affects SSL model quality -- small embeddings lead to higher $k$-nn accuracies. We lastly plot the rate at which positive pairs
have angle greater than $\frac{\pi}{2}$ in Table~\ref{tbl:opposite_halves_effect} and see that, although positive pairs may have large angle at epoch 1, by
epoch 16 this effectively stops occurring. We conclude that the embedding-norm effect indeed slows down SSL training while the opposite-halves effect has
a negligible impact (further evidence in Appendix~\ref{app:opposite_halves_effect}).  \vspace*{0.25cm}

\begin{figure*}[t!]
    \centering
    \includegraphics[width=0.85\textwidth]{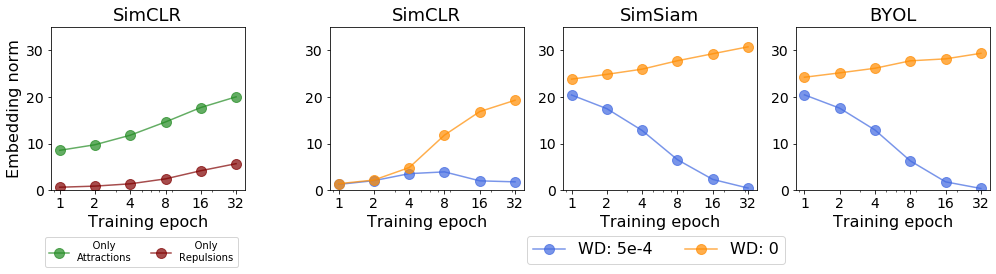}
    \caption{\emph{Left}: The embedding norms when optimizing the subsets of the InfoNCE loss function for SimCLR. \emph{Right}: The mean embedding norms for SimCLR/SimSiam/BYOL as a function of the weight-decay. Note, we use the terms \ltwo-normalization and weight-decay interchangeably.}
    \label{fig:embed_mags} \vspace*{-0.25cm}
\end{figure*}

\hspace*{-0.7cm}\begin{minipage}{0.48\linewidth}
    \fontsize{8.5pt}{8.5pt}\selectfont
    \hspace*{-0.2cm}
    \begin{tabular}{lcccccc}
        \multirow{2}{*}{\makecell{\small Weight\\\small Decay}} & \multicolumn{2}{c}{\small SimCLR} & \multicolumn{2}{c}{\small SimSiam} & \multicolumn{2}{c}{\small BYOL} \\
        \cmidrule{2-7}
        & $k$-nn & $||z_i||$ & $k$-nn & $||z_i||$ & $k$-nn & $||z_i||$ \\
        \midrule
        $\lambda=$ 5e-$2$ & 10.0 & 0.0 & 10.0 & 0.0 & 10.0 & 0.0 \\
        $\lambda=$ 5e-3 & 54.0 & 0.10 & \underline{57.2} & 0.47 & \textbf{78.0} & 0.45 \\
        $\lambda=$ 5e-4 & \textbf{81.3} & 0.41 & \textbf{73.3} & 0.81 & \underline{74.3} & 0.70 \\
        $\lambda=$ 5e-6 & \underline{75.7} & 13.5 & 48.2 & 24.5 & 47.8 & 23.2 \\
        $\lambda=$ 0 & 75.3 & 16.4 & 47.9 & 31.4 & 47.3 & 30.1 
    \end{tabular}
    \captionof{table}{Effect of \ltwo-normalization on SSL training for Cifar10. We report the $k$-nn accuracy and the embedding norms for various weight decay parameters and SSL models.}
    \label{tbl:embedding_norm_effect}
\end{minipage}
\quad \quad
\begin{minipage}{0.45\linewidth}
    \fontsize{8.5pt}{8.5pt}\selectfont
    \begin{tabular}{lrcc}
    Model & Dataset \quad\quad & \makecell{Effect Rate\\Epoch 1} & \makecell{Effect Rate\\Epoch 16} \\
    \midrule
    \multirow{2}{*}{SimCLR} & Imagenet-100 & 2\% & 0\%  \\
    & Cifar-100 & 11\% & 1\% \\
    \cmidrule{1-4}
    \multirow{2}{*}{SimSiam} & Imagenet-100 & 26\% & 1\% \\
    & Cifar-100 & 21\% & 0\% \\
    \cmidrule{1-4}
    \multirow{2}{*}{BYOL} & Imagenet-100 & 28\% & 1\% \\
    & Cifar-100 & 20\% & 0\% \\
    \end{tabular}
    \captionof{table}{The rate at which embeddings are on opposite sides of the latent space for various datasets and SSL models.}
    \label{tbl:opposite_halves_effect}
\end{minipage}

\section{Cut-Initialization}
\label{sec:norm_experiments}

The previous sections showed that the embedding-norm effect occurs in SSL models and that weight decay eventually remedies it. Rather than waiting for the weight-decay to slowly shrink the embeddings, we propose to initialize the embeddings directly at the correct size via \emph{cut-initialization}: an initialization scheme where we divide each layer's weights with cut-constant $c>1$ before training. 

We refer to Tables~\ref{tbl:simclr_wd_cut} and \ref{tbl:simsiam_wd_cut} to study the relationship between the cut-constant and the weight-decay. Specifically, we analyze the classification accuracy after 100 epochs and see that, in both the contrastive and non-contrastive settings, pairing cut-initialization with weight decay accelerates the training process. Based on these results, we default to cut-constant $c=3$ for contrastive methods and $c=9$ for non-contrastive ones. Table~\ref{tbl:accuracies} then evaluates the effectiveness of cut-initialization on full training runs\footnote{Due to lack of compute, we use a batch-size of 128 with the Resnet50 backbone, indicated by the asterisk.}. The takeaway is that cut-initialization essentially guarantees faster convergence.

\begin{table}[t!]
    \centering
    \fontsize{7pt}{7pt}\selectfont
        \begin{tabular}{cr cc ccc ccc ccc}
        \toprule
        & & \multicolumn{2}{c}{Cifar10} & \multicolumn{3}{c}{Cifar100} & \multicolumn{3}{c}{Imagenet-100} & \multicolumn{3}{c}{\makecell{Modified\\Flowers}} \\
        \cmidrule(l{2pt}r{2pt}){3-4} \cmidrule(l{2pt}r{2pt}){5-7} \cmidrule(l{2pt}r{2pt}){8-10} \cmidrule(l{2pt}r{2pt}){11-13}
        \vspace*{0.05cm}
        & & \multicolumn{2}{c}{Epoch} & \multicolumn{3}{c}{Epoch} & \multicolumn{3}{c}{Epoch} & \multicolumn{3}{c}{Epoch} \\ 
        & & \makecell{100} & \makecell{500} & \makecell{100} & \makecell{300} & \makecell{500} & \makecell{100} & \makecell{300} & \makecell{500} & \makecell{100} & \makecell{300} & \makecell{500} \\
        \cmidrule(l{2pt}r{2pt}){1-2} \cmidrule(l{2pt}r{2pt}){3-4} \cmidrule(l{2pt}r{2pt}){5-7} \cmidrule(l{2pt}r{2pt}){8-10} \cmidrule(l{2pt}r{2pt}){11-13}
        \multirow{2}{*}{SimCLR$^*$} & Default & 75.7 & 85.2 & 33.9 & 48.1 & 51.9 & 44.6 & 55.9 & 59.4 & 18.9 & 28.3 & 35.8 \\ 
        & Cut ($c=3$) & 79.6 & 86.1 & 42.3 & 50.8 & 52.6 & 46.7 & 58.1 & 60.9 & 23.7 & 50.8 & 65.3 \\ 
        \cmidrule(l{2pt}r{2pt}){1-2} \cmidrule(l{2pt}r{2pt}){3-4} \cmidrule(l{2pt}r{2pt}){5-7} \cmidrule(l{2pt}r{2pt}){8-10} \cmidrule(l{2pt}r{2pt}){11-13}
        \multirow{2}{*}{SimSiam} & Default & 67.1 & 87.0 & 23.7 & 50.1 & 58.6 & 33.6 & 61.1 & 62.0 & 15.3 & 16.5 & 16.9 \\
        & Cut ($c=9$) & 77.7 & 89.0 & 41.1 & 59.4 & 61.7 & 52.9 & 63.7 & 67.2 & 15.9 & 27.6 & 38.5 \\
        \cmidrule(l{2pt}r{2pt}){1-2} \cmidrule(l{2pt}r{2pt}){3-4} \cmidrule(l{2pt}r{2pt}){5-7} \cmidrule(l{2pt}r{2pt}){8-10} \cmidrule(l{2pt}r{2pt}){11-13}
        \multirow{2}{*}{BYOL} & Default & 69.5 & 88.2 & 18.8 & 53.5 & 61.9 & 26.0 & 65.8 & 67.7 & 16.1 & 17.4 & 17.1 \\
        & Cut ($c=9$) & 82.2 & 88.6 & 46.2 & 62.4 & 61.4 & 54.2 & 68.7 & 68.6 & 15.9 & 27.6 & 43.1 \\
        \bottomrule
        \end{tabular}
    \caption{$k$-nn accuracies for default and cut-initialized training.}\vspace*{-.7cm}
    \label{tbl:accuracies}
\end{table}

We note that the original Flowers102 dataset \cite{flowers} is nearly impossible for self-supervised image recognition. We therefore create a new training split using the supplementary imbalanced class examples. Each class in our \emph{modified-flowers} dataset has between 50 and 250 samples, implying a long-tailed data distribution~\cite{long-tail-survey}. Despite this still being challenging for the default models, incorporating cut-initialization allows them to obtain significantly higher accuracies in Table~\ref{tbl:accuracies}. Thus, not only does cut-initialization accelerate training, it can sometimes facilitate it entirely.

It remains to show that cut-initialization works across SSL models and backbone architectures. To this end, Table~\ref{tbl:accuracies} shows that cut-initialization still provides the expected boost in performance for MoCov2 and for MoCov3 (which uses a ViT-s trasnformer backbone). We point out that Dino's objective function does not depend on the cosine similarity~\cite{dino}. Consequently, it does not (and should not) benefit from cut-initialization. This supports our analysis: using cut-initialization in theoretically unmotivated settings no longer has a positive impact on training. \vspace*{0.3cm}

\hspace*{-1.3cm}
\noindent\begin{minipage}{0.28\textwidth}
    \centering
    \fontsize{8pt}{8pt}\selectfont
    \def\arraystretch{1.165}
    \vspace*{-0.22cm}
    \begin{tabular}{lccc}
        \midrule
         & & Top-1 & Top-5 \\
         \midrule
         \multirow{2}{*}{\rotatebox[origin=c]{90}{\makecell{\fontsize{6.7pt}{6.7pt}\selectfont MoCo\\V2}}} & $c=1$ & 38.7 & 69.7 \\
         & $c=3$ & 43.8 & 71.8 \\
         \midrule
         \multirow{2}{*}{\rotatebox[origin=c]{90}{\makecell{\fontsize{6.7pt}{6.7pt}\selectfont MoCo\\V3}}} & $c=1$ & 54.8 & 82.7 \\
         & $c=3$ & 58.8 & 86.0 \\
         \midrule
         \multirow{2}{*}{\rotatebox[origin=c]{90}{\makecell{\fontsize{6.7pt}{6.7pt}\selectfont DINO}}} & $c=1$ & 43.7 & 76.6 \\
         & $c=3$ & 29.0 & 55.6 \\
         \bottomrule
    \end{tabular}
    \captionof{table}{Imagenet-100 $k$-nn accuracy (epoch 100) for MoCo and Dino.}
    \label{tbl:transformer_accs}
\end{minipage}
\;
\begin{minipage}{0.39\linewidth}
    \vspace*{-0.04cm}
    \fontsize{8.5pt}{8.5pt}\selectfont
    \centering
    \begin{tabular}{cl cccc}
    \multicolumn{6}{c}{\fontsize{10pt}{10pt}\selectfont SimCLR} \\
    \midrule \vspace*{0.1cm}
    & & \multicolumn{4}{c}{Weight Decay} \\
    & & 1e-8 & \underline{1e-6} & 5e-6 & 1e-5 \\
    \cmidrule{3-6}
    \multirow{4}{*}{\rotatebox[origin=c]{90}{\makecell{Cut}}} & $c=1$ & 40.8 & 40.5 & 40.9 & 41.5 \\
    & $c=2$ & 42.7 & 42.8 & 42.9 & 42.2 \\
    & $c=4$ & 42.3 & 41.4 & 42.0 & 41.1 \\
    & $c=8$ & 37.1 & 36.8 & 37.9 & 37.3 \\
    \bottomrule
    \end{tabular}
    \captionof{table}{SimCLR $k$-nn accuracy (epoch 100) on Cifar100 for various cut-constants and weight-decays. Default weight-decay is underlined.}
    \label{tbl:simclr_wd_cut}
\end{minipage}
\;
\begin{minipage}{0.38\linewidth}
    \fontsize{8.5pt}{8.5pt}\selectfont
    \centering
    \begin{tabular}{cl cccc}
    \multicolumn{6}{c}{\fontsize{10pt}{10pt}\selectfont SimSiam} \\
    \midrule \vspace*{0.1cm}
    & & \multicolumn{4}{c}{Weight Decay} \\
    & & 5e-5 & 1e-4 & \underline{5e-4} & 1e-3 \\
    \cmidrule{3-6}
    \multirow{4}{*}{\rotatebox[origin=c]{90}{\makecell{Cut}}} & $c=1$ & 36.7 & 38.5 & 40.5 & 44.7 \\
    & $c=2$ & 41.1 & 44.0 & 48.8 & 42.1 \\
    & $c=4$ & 40.2 & 41.2 & 49.8 & 49.1 \\
    & $c=8$ & 44.4 & 46.4 & 50.2 & 53.2 \\
    \bottomrule
    \end{tabular}
    \captionof{table}{SimSiam $k$-nn accuracy (epoch 100) on Imagenet-100 for various values of $c$ and $\lambda$. Default weight-decay is underlined.}
    \label{tbl:simsiam_wd_cut}
\end{minipage}

\bibliography{references}
\newpage
\clearpage
\appendix

\section{Proofs}
\subsection{Proof of Proposition~\ref{prop:cos_sim_grads}}
\label{prf:prop_grad_grows}
\begin{proof}
    We are taking the gradient of $\mathcal{L}^\mathcal{A}_i$ as a function of $z_i$. The principal idea is that the gradient has a term with direction $\hat{z}_j$ and a term with direction $-\hat{z}_i$. We then disassemble the vector with direction $\hat{z}_j$ into its component parallel to $z_i$ and its component orthogonal to $z_i$. In doing so, we find that the two terms with direction $z_i$ cancel, leaving only the one with direction orthogonal to $z_i$.
    
    Writing it out fully, we have $\mathcal{L}^\mathcal{A}_i = z_i^\top z_j / (||z_i|| \cdot ||z_j||)$. Taking the gradient of this then amounts to using the quotient rule, with $f = z_i^\top z_j$ and $g = ||z_i|| \cdot ||z_j|| = \sqrt{z_i^\top z_i} \cdot \sqrt{z_j^\top z_j}$. Taking the derivative of each, we have
    \begin{align*}
        f' &= \mathbf{z}_j \\
        g' &= ||z_j|| \frac{z_i}{\sqrt{z_i^\top z_i}} = ||z_j|| \frac{\mathbf{z}_i}{||z_i||} \\
        \implies \frac{f' g - g' f}{g^2} &= \frac{\left(\mathbf{z}_j \cdot ||z_i|| \cdot ||z_j|| \right) - \left(||z_j|| \frac{\mathbf{z}_i}{||z_i||} \cdot z_i^\top z_j \right)}{||z_i||^2 \cdot ||z_j||^2} \\
        &= \frac{\mathbf{z}_j}{||z_i|| \cdot ||z_j||} - \frac{\mathbf{z}_i z_i^\top z_j}{||z_i||^3 ||z_j||},
    \end{align*}
    where we use boldface $\mathbf{z}$ to emphasize which direction each term acts along. We now substitute $\cos(\phi_{ij}) = z_i^\top z_j / (||z_i|| \cdot ||z_j||)$ in the second term to get
    \begin{equation}
        \label{eq:quotient_rule}
        \frac{f' g - g' f}{g^2} = \frac{\hat{z}_j}{||z_i||} - \frac{\mathbf{z}_i \cos(\phi)}{||z_i||^2}
    \end{equation}

    It remains to separate the first term into its sine and cosine components and perform the resulting cancellations. To do this, we take the projection of $\hat{z}_j = \mathbf{z}_j / ||z_j||$ onto $\mathbf{z}_i$ and onto the plane orthogonal to $\mathbf{z}_i$. The projection of $\hat{z}_j$ onto $\mathbf{z}_i$ is given by
    \[ \cos \phi_{ij} \frac{\mathbf{z}_i}{||z_i||} \]
    while the projection of $\mathbf{z}_j / ||z_j||$ onto the plane orthogonal to $\mathbf{z}_i$ is
    \[ \left( \mathbf{I} - \frac{z_i z_i^\top}{||z_i||^2} \right) \frac{\mathbf{z}_j}{||z_j||}. \]
    It is easy to assert that these components sum to $\mathbf{z}_j/||z_j||$ by replacing the $\cos \phi_{ij}$ by $\frac{z_i^\top z_j}{||z_i||\cdot ||z_j||}$.

    We plug these into Eq.~\ref{eq:quotient_rule} and cancel the first and third term to arrive at the desired value:
    \begin{align*}
        \frac{f' g - g' f}{g^2} = &\frac{1}{||z_i||} \cos \phi \frac{\mathbf{z}_i}{||z_i||} \\
        &+ \frac{1}{||z_i||} \cdot \left( \mathbf{I} - \frac{z_i z_i^\top}{||z_i||^2} \right) \frac{\mathbf{z}_j}{||z_j||} \\
        &- \frac{\mathbf{z}_i \cos(\phi)}{||z_i||^2} \\
        = &\frac{1}{||z_i||} \cdot \left( \mathbf{I} - \frac{z_i z_i^\top}{||z_i||^2} \right) \frac{\mathbf{z}_j}{||z_j||}.
    \end{align*}
\end{proof}

\subsection{Proof of Corollary~\ref{cor:embeddings_grow}}
\label{prf:cor_embeddings_grow}
\begin{proof}
    First, consider that we applied the cosine similarity's gradients from Proposition~\ref{prop:cos_sim_grads}. Since $z_i$ and $(z_j)_{\perp z_i}$ are orthogonal, $||z_i'||_2^2 = ||z_i||^2 + \frac{\gamma^2}{||z_i||^2}||(z_j)_{\perp z_i}||^2$. The second term is positive if $\sin \phi_{ij} > 0$. Figure~\ref{fig:embedding_visualizations} visualizes this.

    The same exact argument holds for the InfoNCE gradients. The gradient is orthogonal to the embedding, so a step of gradient descent can only increase the embedding's magnitude.
\end{proof}

\subsection{Proof of Theorem~\ref{thm:convergence_rate}}
\label{prf:thm_convergence_rate}
We first restate the theorem:

Let $z_i$ and $z_j$ be positive embeddings with equal norm, i.e. $||z_i|| = ||z_j|| = \rho$. Let $z_i'$ and $z_j'$ be the embeddings after 1 step of gradient descent with learning rate $\gamma$. Then the change in cosine similarity is bounded from above by:
\begin{equation*}
    \hat{z}_i'^\top \hat{z}_j' - \hat{z}_i^\top \hat{z}_j < \frac{\gamma \sin^2 \phi_{ij}}{\rho^2} \left[ 2 - \frac{\gamma \cos \phi}{\rho^2} \right].
\end{equation*}

\noindent We now proceed to the proof:
\begin{proof}
    Let $z_i$ and $z_j$ be two embeddings with equal norm\footnote{We assume the Euclidean distance for all calculations.}, i.e. $||z_i|| = ||z_j|| = \rho$. We then perform a step of gradient descent to maximize $\hat{z}_i^\top \hat{z}_j$. That is, using the gradients in \ref{prop:cos_sim_grads} and learning rate $\gamma$, we obtain new embeddings $z_i' = z_i + \frac{\gamma}{||z_i||} (\hat{z}_j)_{\perp z_i}$ and $z_j' = z_j + \frac{\gamma}{||z_j||} (\hat{z}_i)_{\perp z_j}$. Going forward, we write $\delta_{ij} = (\hat{z}_j)_{\perp z_i}$ and $\delta_{ji} = (\hat{z}_i)_{\perp z_j}$, so $z_i' = z_i + \frac{\gamma}{\rho} \delta_{ij}$ and $z_j' = z_j + \frac{\gamma}{\rho} \delta_{ji}$. Notice that since $z_i$ and $\delta_{ij}$ are orthogonal, by the Pythagorean theorem we have $||z_i'||^2 = ||z_i||^2 + \frac{\gamma^2}{\rho^2}||\delta_{ij}||^2 \geq ||z_i||^2$. Lastly, we define $\rho' = ||z_i'|| = ||z_j'||$.

    We are interested in analyzing $\hat{z}_i'^\top \hat{z}_j' - \hat{z}_i^\top \hat{z}_j$. To this end, we begin by re-framing $\hat{z}_i'^\top \hat{z}_j'$:
    \begin{align*}
        \hat{z}_i'^\top \hat{z}_j' &= \left(\frac{z_i + \frac{\gamma}{\rho} \delta_{ij}}{\rho'}\right)^\top \left(\frac{z_j + \frac{\gamma}{\rho} \delta_{ji}}{\rho'}\right) \\
        &= \frac{1}{\rho'^2}\left[ z_i^\top z_j + \gamma \frac{z_i^\top \delta_{ji}}{\rho'} + \gamma \frac{z_j^\top \delta_{ij}}{\rho'} + \gamma^2 \frac{\delta_{ij}^\top \delta_{ji}}{\rho'^2} \right].
    \end{align*}

    We now consider that, since $\delta_{ij}$ is the projection of $\hat{z}_j$ onto the subspace orthogonal to $z_i$, we have that the angle between $z_i$ and $\delta_{ji}$ is $\pi/2 - \phi_{ij}$. Plugging this in and simplifying, we obtain
    \begin{align*}
        z_i^\top \delta_{ji} &= ||z_i|| \cdot ||\delta_{ji}|| \cos (\pi/2 - \phi_{ij}) \\
        &= ||z_i|| \cdot ||\delta_{ji}|| \sin \phi_{ij} \\
        &= \rho \sin^2 \phi_{ij}.
    \end{align*}
    By symmetry, the same must hold for $z_j^\top \delta_{ij}$.
    
    Similarly, we notice that the angle $\psi_{ij}$ between $\delta_{ij}$ and $\delta_{ji}$ is $\psi_{ij} = \pi - \phi_{ij}$. The reason for this is that we must have a quadrilateral whose four internal angles must sum to $2\pi$, i.e. $\psi_{ij} + \phi_{ij} + 2 \frac{\pi}{2} = 2 \pi$. Thus, we obtain $\delta_{ij}^\top \delta_{ji} = ||\delta_{ij}|| \cdot ||\delta_{ji}|| \cos(\psi) = -\sin^2 \phi_{ij} \cos \phi_{ij}$.

    We plug these back into our equation for $\hat{z}_i'^\top \hat{z}_j'$ and simplify:
    \begin{align*}
        \hat{z}_i'^\top \hat{z}_j' &= \frac{1}{\rho'^2}\left[ z_i^\top z_j + \gamma \frac{z_i^\top \delta_{ji}}{\rho} + \gamma \frac{z_j^\top \delta_{ij}}{\rho} + \gamma^2 \frac{\delta_{ij}^\top \delta_{ji}}{\rho^2} \right] \\
        &= \frac{1}{\rho'^2}\left[ z_i^\top z_j + \gamma \frac{\rho \sin^2 \phi_{ij}}{\rho} + \gamma \frac{\rho \sin^2 \phi_{ij}}{\rho} - \gamma^2 \frac{\sin^2 \phi_{ij} \cos \phi_{ij}}{\rho^2} \right] \\
        &= \frac{1}{\rho'^2}\left[ z_i^\top z_j + 2 \gamma \sin^2 \phi_{ij} - \gamma^2 \frac{\sin^2 \phi_{ij} \cos \phi_{ij}}{\rho^2} \right].
    \end{align*}

    We now consider the original term in question:
    \begin{align*}
        \hat{z}_i'^\top \hat{z}_j' - \hat{z}_i^\top \hat{z}_j &= \frac{1}{\rho'^2}\left[ z_i^\top z_j + 2 \gamma \sin^2 \phi_{ij} - \gamma^2 \frac{\sin^2 \phi_{ij} \cos \phi_{ij}}{\rho^2} \right] - \frac{z_i^\top z_j}{\rho^2} \\
        &\leq \frac{1}{\rho^2}\left[ z_i^\top z_j + 2 \gamma \sin^2 \phi_{ij} - \gamma^2 \frac{\sin^2 \phi_{ij} \cos \phi_{ij}}{\rho^2} \right] - \frac{z_i^\top z_j}{\rho^2} \\
        &= \frac{1}{\rho^2}\left[ 2 \gamma \sin^2 \phi_{ij} - \gamma^2 \frac{\sin^2 \phi_{ij} \cos \phi_{ij}}{\rho^2} \right] \\
        &= \frac{\gamma \sin^2 \phi_{ij}}{\rho^2}\left[ 2 - \frac{\gamma \cos \phi_{ij}}{\rho^2} \right]\\
        &\leq \frac{2 \gamma \sin^2 \phi_{ij}}{\rho^2}
    \end{align*}
    
    This concludes the proof.
\end{proof}

\section{InfoNCE Gradients}
\label{app:infonce_grads}

The InfoNCE gradient has a similar structure to the cosine similarity's:
\begin{corollary}
    \label{cor:infonce_grads}
    Let $\mathbf{Z}$ be a set of $b$ embeddings in $\mathbb{R}^d$, with $z_i$ corresponding to the $i$-th row of $\mathbf{Z}$. Let $\mathcal{L}_i(\mathbf{Z}) = \mathcal{L}_i^\mathcal{A}(\mathbf{Z}) + \mathcal{L}_i^\mathcal{R}(\mathbf{Z})$. Then the gradient of $\mathcal{L}_i(\mathbf{Z})$ w.r.t. $z_i$ is
    \begin{equation}
        \label{eq:infonce_grads}
        \nabla^{InfoNCE}_i = \frac{1}{||z_i||} \left[ \left( \hat{z}_j \right)_{\perp z_i} + \sum_{k \not\sim i} \left( \hat{z}_k \cdot \frac{\text{\emph{sim}}(z_i, z_k)}{S_i} \right)_{\perp z_i} \right].
    \end{equation}\vspace*{-0.2cm}
    \label{cor:infonce_grads}
\end{corollary}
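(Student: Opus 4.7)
The plan is to decompose $\mathcal{L}_i = \mathcal{L}_i^{\mathcal{A}} + \mathcal{L}_i^{\mathcal{R}}$ and differentiate each summand separately, reusing Proposition~\ref{prop:cos_sim_grads} as a black box. The attractive part contributes $\frac{1}{\|z_i\|}(\hat{z}_j)_{\perp z_i}$ directly from that proposition, so all of the real work goes into $\mathcal{L}_i^{\mathcal{R}}(\mathbf{Z}) = \log S_i$ with $S_i = \sum_{k \not\sim i} \exp(\hat{z}_i^\top \hat{z}_k)$.

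For the repulsion term I would apply the chain rule in two layers. The outer $\log$ contributes a factor of $1/S_i$; each exponential differentiates to itself, yielding a factor $\exp(\hat{z}_i^\top \hat{z}_k) = \text{sim}(z_i, z_k)$; and the innermost derivative is $\nabla_{z_i}(\hat{z}_i^\top \hat{z}_k)$, which is exactly the computation performed in Proposition~\ref{prop:cos_sim_grads} with $j$ relabeled to $k$. Assembling these three factors shows that each negative index $k$ contributes
\[ \frac{1}{\|z_i\|}\cdot\frac{\text{sim}(z_i, z_k)}{S_i}\,(\hat{z}_k)_{\perp z_i}. \]

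To match the displayed form in Eq.~\ref{eq:infonce_grads}, I would then use the linearity of the projector $\mathbf{I} - z_i z_i^\top/\|z_i\|^2$: the scalar weight $\text{sim}(z_i, z_k)/S_i$ can be pulled inside the $(\cdot)_{\perp z_i}$ bracket for free, and the common $1/\|z_i\|$ can be pulled outside the summation. Adding the attractive contribution back in then gives the stated formula.

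Nothing in this derivation is technically difficult; the only thing that requires care is bookkeeping the sign convention of Proposition~\ref{prop:cos_sim_grads} so that both the attractive and the softmax-weighted repulsive terms end up with the same $+1/\|z_i\|$ prefactor, as written in the corollary. A remark worth including at the end of the proof, since it is what justifies Corollary~\ref{cor:embeddings_grow} in the InfoNCE case: because every summand of $\nabla^{InfoNCE}_i$ lies in the subspace orthogonal to $z_i$, the entire gradient does as well, so a step of gradient descent acts purely tangentially to $z_i$ and therefore cannot decrease $\|z_i\|$.
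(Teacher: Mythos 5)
Your proposal is correct and follows essentially the same route as the paper's proof: apply the chain rule through $\log S_i$ to get the softmax-weighted sum, reuse Proposition~\ref{prop:cos_sim_grads} for each inner derivative $\partial(\hat{z}_i^\top \hat{z}_k)/\partial z_i$, and add the attractive contribution. Your caution about the sign convention is well placed --- the paper states $\mathcal{L}_i^\mathcal{A} = -\hat{z}_i^\top\hat{z}_j$ but its proof of Proposition~\ref{prop:cos_sim_grads} differentiates $+\hat{z}_i^\top\hat{z}_j$, so strictly the attractive and repulsive terms should carry opposite signs; that inconsistency is inherited from the paper's statement, not introduced by your argument.
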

\begin{proof}
    We are interested in the gradient of $\mathcal{L}_i^\mathcal{R}$ with respect to $z_i$. By the chain rule, we get
    \begin{align*}
        \nabla_i^\mathcal{R} &= \frac{\sum_{k \not\sim i} \text{sim}(z_i, z_k) \frac{\partial \frac{z_i^\top z_k}{||z_i|| \cdot ||z_k||}}{\partial z_i}}{\sum_{k \not\sim i} \text{sim}(z_i, z_k)} \\
        &= \frac{\sum_{k \not\sim i} \text{sim}(z_i, z_k) \frac{\partial \frac{z_i^\top z_k}{||z_i|| \cdot ||z_k||}}{\partial z_i}}{S_i}
    \end{align*}
    It remains to substitute the result of Prop. \ref{prop:cos_sim_grads} for $\partial \frac{z_i^\top z_k}{||z_i|| \cdot ||z_k||} / \partial z_i$.

    We sum this this with the gradients of the attractive term to obtain the full InfoNCE gradient, completing the proof.
\end{proof}

In essence, because the InfoNCE loss is a function of the cosine similarity, the chain rule implies that its gradients have similar properties to the cosine similarity's. Specifically, just like those of $\mathcal{L}_i^\mathcal{A}$, the gradients of $\mathcal{L}_i^\mathcal{R}$ have the properties that (1) they are inversely scaled by $||z_i||$, (2) they are only defined in $z_i$'s tangent space, and (3) optimizing the repulsions cause the embeddings to grow\footnote{The primary difference is that the repulsive gradients have a weighted average over all the negative samples $z_k$. We note that this weighted average over a set of unit vectors leads to the repulsive force being smaller than the attractive one.}. Since the InfoNCE loss is the sum of $\mathcal{L}_i^\mathcal{A}$ and $\mathcal{L}_i^\mathcal{R}$, these properties all extend to the InfoNCE loss as well. 

\section{Further Discussion and Experiments}
\label{app:experiments}

\subsection{Opposite-Halves Effects}
\label{app:opposite_halves_effect}

We first consider how much the opposite-halves effect plays a role in SSL training in theory. Referring back to Figure~\ref{fig:convergence_rate}, we see that the effect is most impactful when the angle between positive embeddings is close to $\pi$, i.e. $\phi_{ij} > \pi - \varepsilon$ for $\varepsilon \rightarrow 0$. The following result shows that this is exceedingly unlikely in high-dimensions:
\begin{proposition}
    \label{prop:unlikely_opp_halves}
    Let $x_i, x_j \sim \mathcal{N}(0, \mathbf{I})$ be $d$-dimensional, i.i.d. random variables and let $0 < \varepsilon < 1$. Then \vspace*{-0.1cm}
    \begin{equation}
    \label{eq:opp_halves_unlikely}
    \mathbb{P}\left[ \hat{x}_i^\top \hat{x}_j \geq 1 - \varepsilon \right] \leq \frac{1}{2d(1-\varepsilon)^2}.
    \end{equation}\vspace*{-0.3cm}
\end{proposition}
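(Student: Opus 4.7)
The plan is to reduce the problem to a statement about a single coordinate of a uniform random vector on the sphere $S^{d-1}$, and then apply Chebyshev's inequality together with a symmetry argument that contributes the factor of $1/2$.

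First, I would exploit the rotational invariance of the standard Gaussian. Since $x_i, x_j \sim \mathcal{N}(0, \mathbf{I})$ are i.i.d., the normalized vectors $\hat x_i = x_i/\|x_i\|$ and $\hat x_j = x_j/\|x_j\|$ are independent and each uniformly distributed on $S^{d-1}$. Conditioning on $\hat x_i$ and applying the rotation that maps $\hat x_i$ to the first standard basis vector $e_1$ (which leaves the distribution of $\hat x_j$ invariant), I can set
\[
Y \;:=\; \hat x_i^\top \hat x_j \;\stackrel{d}{=}\; (\hat x_j)_1,
\]
so the quantity of interest has the same distribution as the first coordinate of a uniform random vector on $S^{d-1}$.

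Next I would compute the first two moments of $Y$. The symmetry $x_j \mapsto -x_j$ gives $\mathbb{E}[Y] = 0$. For the variance, I use that on the sphere $\sum_{k=1}^d (\hat x_j)_k^2 = 1$, and all coordinates have the same distribution by the rotational symmetry of the uniform measure, so
\[
\mathrm{Var}(Y) \;=\; \mathbb{E}[(\hat x_j)_1^2] \;=\; \frac{1}{d}\sum_{k=1}^d \mathbb{E}[(\hat x_j)_k^2] \;=\; \frac{1}{d}.
\]
Chebyshev's inequality then yields
\[
\mathbb{P}\bigl[\,|Y| \ge 1-\varepsilon\,\bigr] \;\le\; \frac{\mathrm{Var}(Y)}{(1-\varepsilon)^2} \;=\; \frac{1}{d(1-\varepsilon)^2}.
\]

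Finally, I would recover the missing factor of $1/2$ by symmetry. Because $Y$ and $-Y$ have the same distribution, $\mathbb{P}[Y \ge 1-\varepsilon] = \mathbb{P}[Y \le -(1-\varepsilon)]$, so $\mathbb{P}[|Y|\ge 1-\varepsilon] = 2\,\mathbb{P}[Y \ge 1-\varepsilon]$ and the stated bound follows. There is no serious obstacle here; the only thing to be careful about is justifying that $\hat x_j$ is uniform on $S^{d-1}$ and independent of $\hat x_i$, which follows cleanly from the rotational invariance of $\mathcal{N}(0,\mathbf{I})$.
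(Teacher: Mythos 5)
Your proof is correct and follows essentially the same route as the paper: compute that the cosine similarity has mean $0$ and variance $1/d$, apply Chebyshev's inequality with threshold $1-\varepsilon$, and halve the two-sided bound by symmetry. The only difference is that you derive the moments from first principles via the uniform-on-the-sphere reduction, whereas the paper simply cites them from prior work.
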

\begin{proof}
By \cite{distribution_of_cosine_sim}, the cosine similarity between two i.i.d. random variables drawn from $\mathcal{N}(0, \mathbf{I})$ has expected value $\mu = 0$ and variance $\sigma^2 = 1/d$, where $d$ is the dimensionality of the space. We therefore plug these into Chebyshev's inequality:
\begin{align*}
    &\text{Pr} \left[ \abs*{\frac{x_i^\top x_j}{||x_i||\cdot ||x_j||} - \mu} \geq k \sigma \right] \leq \frac{1}{k^2} \\
    \rightarrow & \text{Pr} \left[ \abs*{\frac{x_i^\top x_j}{||x_i||\cdot ||x_j||}} \geq \frac{k}{\sqrt{d}} \right] \leq \frac{1}{k^2}
\end{align*}

\noindent We now choose $k = \sqrt{d}(1 - \varepsilon)$, giving us
\[ \mathbb{P}\left[ \abs*{\frac{x_i^\top x_j}{||x_i|| \cdot ||x_j||}} \geq 1 - \varepsilon \right] \leq \frac{1}{d(1-\varepsilon)^2}.\]

It remains to remove the absolute values around the cosine similarity. Since the cosine similarity is symmetric around $0$, the likelihood that its absolute value exceeds $1 - \varepsilon$ is twice the likelihood that its value exceeds $1- \varepsilon$, concluding the proof.

We note that this is actually an extremely optimistic bound since we have not taken into account the fact that the maximum of the cosine similarity is 1.
\end{proof}

Thus, the opposite-halves effect is exceedingly unlikely to occur in theory.

Nonetheless, Table~\ref{tbl:opposite_halves_effect} shows that embeddings have angle greater than $\pi/2$ at a rate of $5\%$ and $25\%$ for SimCLR and SimSiam/BYOL, respectively. So even if the `strongest' variant of the opposite-halves effect is not occurring, a weaker one may still be. To visualize why this happens more for non-contrastive embeddings, we train SimCLR and BYOL with a $2$-dimensional projection/prediction latent space (as is done for MoCo in~\cite{understanding_contr_learn}) and draw lines between positive samples in Figure~\ref{fig:2d_latent_spaces}. SimCLR performs as expected -- it distributes the embeddings across the latent space and keeps positive samples close together. However, Figure~\ref{fig:2d_latent_spaces} suggests that BYOL begins by concentrating its embeddings at antipodal points on the hypersphere and does not spread out evenly from there. Drawing from \cite{byol_batchnorm, byol_without_batch_statistics, simsiam}, we attribute this to the batch-normalization that is present in BYOL and SimSiam's projection/prediction heads\footnote{Consider what happens if the distribution has collapsed to a single point right before the batch-normalization layer. Since batch-normalization subtracts the mean and divides by the variance, the single collapsed cluster becomes chaotically distributed across the latent space~\cite{byol_batchnorm}. Thus, batch-normalized models cannot collapse to a single point.}.

\begin{figure}
    \includegraphics[width=0.95\linewidth]{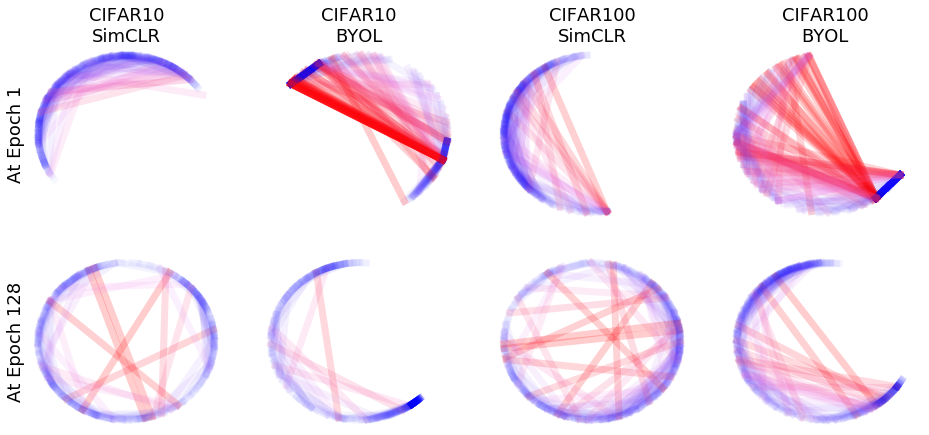}
    \caption{Lines between positive samples in 2D latent space during training. The color goes from red to blue as the cos. sim. goes from $-1$ to $1$.}
    \label{fig:2d_latent_spaces}
\end{figure}

Putting these pieces together explains why SimSiam and BYOL have a higher rate of the opposite-halves effect at the start of training. Since they only optimize attractions, they must begin by concentrating embeddings together. Due to batch-normalization, this concentration occurs in several clusters that straddle the origin. Thus, some positive pairs will necessarily have an angle greater than $\frac{\pi}{2}$ at the beginning of training (seen in the `Epoch 1' row of Figure~\ref{fig:2d_latent_spaces}).

We point out, however, that very early into training (epoch 16), every method has a rate of effectively 0 for the opposite-halves effect. This is visualized in Figure~\ref{fig:2d_latent_spaces}, where the learned representations at epoch 128 have very few `red' lines\footnote{Interestingly, Figure~\ref{fig:2d_latent_spaces} also implies that attraction-only methods never reach uniformity over the latent space.}. Furthermore, the rates in Table~\ref{tbl:opposite_half_effect} measure how often $\phi_{ij} > \frac{\pi}{2}$: the absolute weakest version of the opposite-halves effect. If we instead count the number of times that $\phi_{ij} > \frac{7\pi}{8}$ (the range where the opposite halves effect could start mattering according to Figure~\ref{fig:convergence_rate}), then it occurs 0 times across all the experiments in Table~\ref{tbl:opposite_half_effect}. Thus, it is clear that while some weak variant of the opposite-halves effect must occur at the beginning of training, it does not have a strong impact on the convergence dynamics and, in either case, disappears quite quickly.

\subsection{Cut-Initialization}
To get a preliminary intuition for cut-initialization's impact on the convergence, we plot the effect of the cut constant on the embedding norms and accuracies over training in Figure~\ref{fig:cut_experiments}\footnote{To make the effect more apparent, we use weight-decay $\lambda=5e-4$ in all models.}. We see that dividing the network's weights by $c>1$ leads to immediate convergence improvements in all models. Furthermore, this effect degrades gracefully: as $c > 1$ becomes $c < 1$, the embeddings stay large for longer and, as a result, the convergence is slower. We also see that cut-initialization has a more pronounced effect in attraction-only models -- a trend that remains consistent throughout the experiments.

\begin{figure}[t!]
    \centering
    \includegraphics[width=0.85\textwidth]{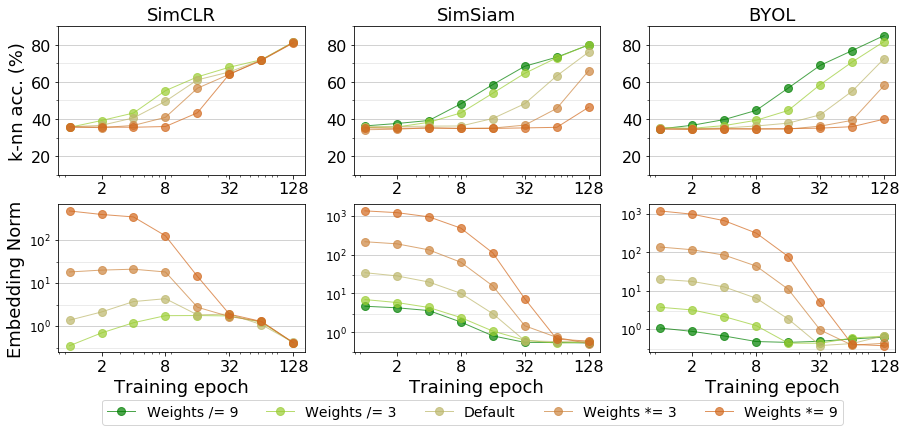}
    \caption{The effect of cut-initialization on Cifar10 SSL representations. $x$-axis and embedding norm's $y$-axis are log-scale. $\lambda=5$e$-4$ in all experiments.}
    \label{fig:cut_experiments} \vspace*{-0.7cm}
\end{figure}

\end{document}